\newcommand{\sysname}{dPASP}
\pgfplotsset{compat=1.18}
\definecolor{palette1}{HTML}{003049}
\definecolor{palette2}{HTML}{a62639}
\definecolor{palette3}{HTML}{f77f00}
\definecolor{palette4}{HTML}{3bb273}
\definecolor{palette5}{HTML}{4d9de0}
\definecolor{palette6}{HTML}{d7cf07}
\definecolor{palette7}{HTML}{97b1a6}
\definecolor{palette8}{HTML}{386641}
\definecolor{mintedframe}{HTML}{5ca4a9}
\newcommand{\head}[1]{\mathsf{head}(#1)}
\newcommand{\body}[1]{\mathsf{body}(#1)}
\newcommand{\bodyp}[1]{\mathsf{body}^+(#1)}
\newcommand{\bodyn}[1]{\mathsf{body}^-(#1)}
\newcommand{\pr}{\mathbb{P}}
\newcommand{\set}[1]{\mathbf{#1}}
\theoremstyle{plain}
\theoremstyle{definition}
\theoremstyle{remark}
\icmltitlerunning{\sysname: Differentiable Probabilistic Answer Set Programming}
\begin{document}

\twocolumn[
\icmltitle{\sysname: A Comprehensive Differentiable Probabilistic Answer Set Programming Environment For Neurosymbolic Learning and Reasoning}



\icmlsetsymbol{equal}{*}

\begin{icmlauthorlist}
\icmlauthor{Renato Lui Geh}{ime}
\icmlauthor{Jonas Gonçalves}{ime}
\icmlauthor{Igor Cataneo Silveira}{ime}
\icmlauthor{Denis Deratani Mauá}{ime}
\icmlauthor{Fabio Gagliardi Cozman}{poli}
\end{icmlauthorlist}

\icmlaffiliation{ime}{Instituto de Matemática e Estatística, Universidade de São Paulo, São Paulo, Brazil.}
\icmlaffiliation{poli}{Escola Politécnica, Universidade de São Paulo, São Paulo, Brazil}

\icmlcorrespondingauthor{Renato Lui Geh}{renatogeh@gmail.com}
\icmlcorrespondingauthor{Denis Deratani Mauá}{ddm@ime.usp.br}
\icmlcorrespondingauthor{Fabio Gagliardi Cozman}{fgcozman@usp.br}

\icmlkeywords{Machine Learning, Knowledge Representation, Logic Programming, Probabilistic Programming, Probabilistic Logic Programming, Answer Set Programming}

\vskip 0.3in
]



\printAffiliationsAndNotice{}  

\begin{abstract}
We present \sysname,
  a novel declarative probabilistic logic programming framework for differentiable neuro-symbolic reasoning.
  The framework allows for the specification of discrete probabilistic models with neural predicates, logic constraints and interval-valued probabilistic choices, thus supporting models that combine low-level perception (images, texts, etc), common-sense reasoning, and (vague) statistical knowledge.
  To support all such features, we discuss the several semantics for probabilistic logic programs that can express nondeterministic, contradictory, incomplete and/or statistical knowledge.
  We also discuss how gradient-based learning can be performed with neural predicates and probabilistic choices under selected semantics.
  We then describe an implemented package that supports inference and learning in the language, along with several example programs. 
  The package requires minimal user knowledge of deep learning system's  inner workings, while allowing end-to-end training of rather sophisticated models and loss functions. 
\end{abstract}
\section{Introduction}

Answer Set Programming (ASP) \cite{gebser12,lifschitz19} is a powerful declarative paradigm for specifying domain knowledge by means of logic programming.
For example, the following program very intuitively describes the causal relationships between stress, smoking and peer pressure \cite{fierens15}.
\begin{pasp}
smokes(X) :- stress(X).
smokes(X) :- influences(Y,X), smokes(Y).
\end{pasp}
The program can be extended with a database of known facts such as \paspi{influences(anna,bill)} and \paspi{stress(anna)}, and used to conclude that \paspi{smokes(bill)} must be true (according to some selected semantics).

While powerful, ASP cannot cope with uncertainty, that abounds in data-driven situations.
For instance, suppose that we only known (or are only willing to ascertain) that Anna influences Bill with probability 0.8:
\begin{pasp}
0.8::influences(anna,bill).
\end{pasp}
Such probabilistic facts appear quite naturally when they are the output of perception models.
In particular, when differentiable models such as deep neural networks are used, as is the case of image, text and speech recognition.

Once probabilistic facts are allowed, we enter the realm of Probabilistic Answer Set Programming (PASP) \cite{cozman20}; and when the probabilistic facts are linked to the output of neural probabilistic classifiers, we name the resulting framework Neural-Probabilistic Logic Programming (NPLP).
Essentially, NPLP provides a tight coupling of low-level reasoning models (e.g., image recognition) and high-level reasoning (e.g., planning).
By exploiting existing domain knowledge, NPLP allows, among other things, end-to-end gradient-based weakly supervised learning of neural models, thus providing an effective implementation for neurosymbolic reasoning  \cite{yang20,manhaeve21}.

An important and mostly overlooked component of NPLP systems is the interface connecting real-world objects, neural network models and logic programming languages.
Existing implementations of NPLP such as \textsc{DeepProbLog} \cite{manhaeve21} and \textsc{NeurASP} \cite{yang20} require the user to write the ``glue'' between the neural-probabilistic logic language and the deep learning framework themselves, a task which is not always straightforward.
These frameworks are also limited: \textsc{DeepProbLog} forbids negative cycles and nondeterministic knowledge; \textsc{NeurASP} disallows probabilistic facts; and neither of them deals with contradictions or vague uncertain knowledge.

In order to fill those gaps, we introduce {\sysname}, a new NPLP framework that features a large set of ASP constructs, can learn from both probabilistic facts and neural atoms, and supports several semantics, from standard maximum entropy probability measures to richer semantics that can accommodate contradictions and partial specification of probabilistic knowledge.

\section{Background}

We review some background knowledge on logic and probabilistic logic programming.
We assume the reader has some familiarity with the terminology and basics of logic programming \cite{gebser12}, and focus on less common concepts such as L-stable semantics, probability models and the credal L-stable semantics \cite{rocha22}.

\subsection{Logic Programming}

A logic program is a finite set of disjunctive rules of the form:
\begin{pasp}
&$h_1$&; &$\ldots$&; &$h_k$& :- &$b_1$&, &$\ldots$&, &$b_n$&, not &$b_{n+1}$&, &$\ldots$&, not &$b_{n+m}$&.
\end{pasp}
where each $h_i$ and $b_j$ is an atom and
$\mathsf{not}$ denotes default negation.
We say that $\head{r}=\{h_i\}_{i=1}^k$ is the \emph{head} of the rule, while the atoms right of :- are the \emph{body}, denoted $\body{r}$; sets $\bodyn{r}$ and $\bodyp{r}$ denote, resp.,  the sets of negated (i.e., preceded by $\mathsf{not}$) and non-negated atoms in $\body{r}$.
The rule is disjunctive if the head has two or more atoms.
It is called an integrity constraint if the head is empty, and normal if the head is a singleton.
And it is a fact if it is normal and the body is empty.

The \emph{Herbrand base} of a program is the set formed by all ground atoms that can be built using predicate names and constants in the program. 
The \emph{grounding} of a program is the propositional program obtained by grounding each rule, that is, replacing variables with constants from the Herbrand base in every consistent way. 
The semantics of a program with variables is the semantics of its grounding.

Let $t$, $f$ and $u$ denote ground atoms which do \emph{not} occur in a program $L$.
A three-valued interpretation $I$, also called a partial interpretation, is a function from the atoms in the Herbrand base to $\{0,0.5,1\}$ such that $I(t)=1$, $I(f)=0$, $I(u)=0.5$
We define $I^t = \{ A \mid I(A) = 1\}$, $I^f = \{A \mid I(A) = 0\}$ and $I^u = \{ A \mid I(A) = 0.5 \}$ as the sets of true, false and undefined atoms, respectively, according to $I$.
An interpretation is \emph{total} if it assigns every atom (other than $u$) to either true or false.
The interpretation of the positive body of a rule $r$ is the minimum over the value of its atoms,  $I(\bodyp{r}) = \min\{ I(A) \mid A \in \bodyp{r} \}$. The value of the negative body is the minimum of the complements of the atom's values, $I(\bodyn{r}) = \min\{ 1-I(A) \mid A \in \bodyn{r} \}$. The interpretation of the body is $I(\body{p})=\min\{I(\bodyp{r}),I(\bodyn{r})\}$.
Last, the interpretation of the rule's head is the maximum of its atoms, $I(\head{r})=\max\{ I(A) \mid A \in \head{r} \}$.

An interpretation \emph{satisfies} a rule $r$ if and only if $I(\head{r}) \geq I(\body{r})$. 
$I$ is a \emph{model} of a program if it satisfies all of its rules.
We define a partial order $\leq$ (reflexive, antisymmetric and transitive) of interpretations as: 
$I_0 \leq I_1$ if and only if $I_0(A) \leq I_1(A)$ for all $A$.
A model $I$ is minimal if there is no model $I' \leq I$ such that $I \neq I'$. 
If $I$ is total then it is minimal if and only if $I^t$ is $\subseteq$-minimal.
Note that by (3) an interpretation that assigns undefined to all atoms is always a model as long as the program can be rewritten so it does not contain disjunctive rules nor integrity constraints.
Thus, since $\leq$ is a partial order and the Herbrand base is finite, every normal program admits one or more minimal models \cite{przymusinski91}.
This is in contrast to complete (i.e., true/false) semantics, for which a normal program might have none, one or multiple minimal models.

The stability of a model $I$ is connected to the notion of the program's \emph{reduct} w.r.t.\ $I$, written $P/I$, obtained by the modified Gelfond–Lifschitz transformation~\cite{przymusinski91}. 
The transformation operates on each atom $A \in \bodyn{r}$ in the negative body of a rule $r \in L$ and replaces it by the atom $t$, $f$ or $u$ corresponding to its semantics in $I$. Formally, it replaces $A$ with:
(i) $t$ if $A \in I^f$; or
(ii) $f$ if $A \in I^t$; or 
(iii) $u$ if $A \in I^u$. 
We say that $I$ is a \emph{partial stable model} of $L$ if $I$ is a minimal model of $L / I$.
This is the partial stable model semantics (P-stable) for logic programs with disjunctions and default negation \cite{przymusinski91}.
A stable model $I_0$ is \emph{least undefined} if there is no other stable model $I_1$ with $I_0^u\subset I_1^u$. That is, $I$ is least undefined if there is no other stable model that defines (as true or false) more atoms than it.
Then we say that $I$ is a least undefined stable model of $L$, or L-stable model, for short.
This is finally the L-stable model semantics of disjunctive logic programs~\cite{sacca97}.

\subsection{Probabilistic Logic Programming}

Probabilistic logic programming extend logic programs with \emph{annotated disjunctive rules} (ADR) of the form
\begin{pasp}
&$p_1$&::a&$_1$&;&$\ldots$&;&$p_k$&::a&$_k$& :- b&$_1$&,&$\ldots$&,b&$_n$&,not &$b_{n+1}$&,&$\ldots$&,not b&$_{n+m}$&.
\end{pasp}
where $p_1,\dotsc,p_k$ are nonnegative real values whose sum is equal to one.
We also allow ADRs where $\sum_i p_i < 1$.
Semantically, this is a syntactic sugar where the head is extended by atom $f$ (which is never true) with probability $1-\sum_i p_i$.
For example, a probabilistic fact is an ADR with a singleton head and empty body, written as:
\begin{pasp}
&$\theta$&::a.    
\end{pasp}
It is equivalent to the ADR
\begin{pasp}
&$\theta$&::a; &$1-\theta$&::&$f$&.
\end{pasp}

A probabilistic logic program is a finite set of ADRs and (non-probabilistic) rules.
The semantics of probabilistic logic programs extends that of Sato's distribution semantics \cite{sato95}, in which independent random choices induce logic programs.

A \emph{total choice} independently selects one atom of the head of each ADR in a probabilistic logic program.
Note that a total choice can select different atoms for rules with the same head (and different bodies).
Each total choice $\theta$ induces a logic program where each ADR $r$ is transformed into a normal rules
\begin{pasp}
&$h$& :- &$b_1$&, &$\ldots$&, not &$b_m$&.    
\end{pasp}
where $h$ is the atom selected by the total choice for rule $r$, and $b_1,\dotsc,b_m$ are the body. 

An interpretation of a probabilistic logic program is simply an interpretation of the logic program obtained by dropping probabilities (i.e., turning ADRs into dijsunctive rules).
Let $\theta$ denote both a total choice and its induced logic program, and $\Gamma(\theta)$ denote the selected  models of $\theta$, according to some semantics.
A \emph{probability model} is a probability measure $\pr$ over the interpretations $I$ of the program such that (i) $\pr(I) > 0$ if and only if $I$ is in $\Gamma(\theta)$, (ii) $\pr(\Gamma(\theta)) = \prod_{r} p_r$, where the product runs over all ADRs in the probabilistic logic program and $p_r$ is the probability annotating $\theta(r)$ in rule $r$.
The \emph{maximum entropy} probability model, or max-ent model for short, is the probability model that splits $\pr(\Gamma(\theta))$ evenly over each stable model, that is, $\pr(I)=\sum_{\theta: I \in \Gamma(\theta)} (1/|\Gamma(\theta)|)$ for any stable model $I$.

The \emph{credal semantics} assigns to each atom $a$ a pair of lower and upper probabilities such that $\underline{\pr}(a) = \min_{\pr} \pr(a)$ and $\underline{\pr}(a) = \max_{\pr} \pr(a)$, where the optimizations are over the set of probability models (which is closed and convex).
The max-ent semantics assigns to $a$ the probability of the corresponding max-ent probability model.
Note that by definition we have that $\pr_\text{max-ent}(a) \in [\underline{\pr}(a), \overline{\pr}(a)]$.

Note that the previous probabilistic semantics are agnostic with respect to the logic semantics used to produce $\Gamma(\theta)$, requiring only that it is non-empty for each $\theta$.
In the literature, such a set is more commonly selected as the set of stable models \cite{cozman20} or well-founded models \cite{fierens15}.

As we will discuss later,  {\sysname} supports the combination of either the stable, partial, L-stable or \textsc{SMProbLog} semantics, for the logic part, and the credal or max-ent semantics, for the probabilistic part.
The result is thus one of eight different semantics, to be selected by the user and according to the task at hand.
Not every feature is however available for each selected semantics: for instance, learning under credal or L-stable semantics remains an open problem.

\section{The {\sysname} Framework}

We now describe the language as well as inference and learning routines of the {\sysname} framework.

\subsection{Language}

{\sysname} extends \textsc{clingo}'s syntax \cite{gebser17} to neural-probabilistic logic programs by including annotated disjunctive rules and \emph{neural annotated disjunctive rule} (NADR). The latter are ADR whose probabilities are set by a(n externally defined) neural network.

The neural networks that parametrize NADRs are specified using standard deep learning frameworks such as PyTorch.
To facilitate the integration between the probabilistic logic program and the deep learning framework, {\sysname} allows the embedding of Python code within the program via the \paspi{#python} guard.
Code within this guard is executed and all functions declared are available for use within special predicates in the program.
As an example, consider the following \paspi{#python} code prototype for constructing a neural network in PyTorch for MNIST.
\begin{pasp}
#python
def net(): return ... # Neural network.
# (i=1)st or (i=2)nd half of train set as tensor.
def mnist_tr(i): return ... 
# (i=1)st or (i=2)nd half of test set as tensor.
def mnist_te(i): return ...  
#end.
\end{pasp}

The specification of the interface between raw data (that is fed to the neural network in the python code part) and program constants is managed by a special rule of the form
\begin{pasp}
atom(x) &$\sim$& test(@arg1), train(@arg2).
\end{pasp}
In that rule, \paspi{atom} is a user-defined one-place predicate name used to represent 
an object fed into the neural network, \paspi{x} is a constant identifying  a particular object (since the same network can be used several times in the same program) and \paspi{test} and \paspi{train} are reserved predicates, whose arguments are either paths to CSV files or Python functions as defined in \paspi{#python}.
As an example, consider the example of adding two digit images classified by neural networks, as described in \citet{manhaeve21}; here, the network's inputs are images, with each grounding of the NADR representing one of the two digits in the sum, and different images in the train and test sets.
\begin{pasp}
input(0) &$\sim$& test(@mnist_tr(0)), train(@mnist_te(0)).
input(1) &$\sim$& test(@mnist_tr(1)), train(@mnist_te(1)).
\end{pasp}
Note that \paspi{0} and \paspi{1} are arbitrary constants used to identify the two distinct inputs of the same neural network.
These constants are associated with pairs of images taken either from the test or the train set, depending on the task being solved.

A NADR's head must contain a single (possibly non-ground) atom of the form \paspi{l::f(X, {v&$_1$&,&$\ldots$&,v&$_k$&})}, where \paspi{l} is either a \paspi{?}, indicating the neural ADR is learnable, or a \paspi{!}, in which case its parameters are fixed during learning.
Variable \paspi{X} is used to ground the NADR according to its body, while \paspi{v&$_1$&,&$\ldots$&,v&$_k$&} are the possible values the annotated disjunctive rule may take.
Optionally, an interval may be passed as a shorthand; for instance, \paspi{0..9} is equivalent to writing out all digits from 0 to 9.
The network in the NADR is embedded by either passing a function declared in \paspi{#python} or a GitHub repository in PyTorch Hub format.
If the NADR is learnable, then we may optionally pass parameters to the PyTorch optimizer via the \paspi{with} operator.
Finally, the NADR must then be declared with the data predicate as one of the subgoals in its body.
In our running MNIST addition example, the neural predicate \paspi{digit} must cover all digits from 0 to 9.
\begin{pasp}
?::digit(X, {0..9}) as @net :- input(X).
\end{pasp}

Data embedded into special data predicates (e.g. \paspi{input(0)} and \paspi{input(1)} in the previous example) are passed as PyTorch tensors to the neural networks, allowing for efficient parallelization in the CPU or GPU.

Regular ADRs are declared in a similar fashion to NADRs in {\sysname}, except they allow setting an initial probability value for the optimization when the rule is learnable.
For example:
\begin{pasp}
0.5?::h&$_1$&; ?::h&$_2$&; ?::h&$_3$& :- b&$_1$&, b&$_2$&, &$\ldots$&, b&$_4$&.
\end{pasp}
When no initialization value is given, like in the case of \paspi{h&$_2$&} and \paspi{h&$_3$&}, {\sysname} uniformly distributes the remaining mass to the rest of the atoms.
{\sysname} also supports credal facts in order to specify a credal interval for imprecise inference, where
\begin{pasp}
[0.2, 0.7]::f.
\end{pasp}
indicates \paspi{f} may take a probability as low as \paspi{0.2} and as high as \paspi{0.7}.

Purely logic rules may contain arithmetic operations and comparisons over variables of annotated disjunctive rules as long as they are safe and appear in the body.
\begin{pasp}
sum(Z) :- digit(0, X), digit(1, Y), Z=X+Y.
both_even :- digit(0, X), digit(1, Y), X\2=0, Y\2=0.
\end{pasp}
Querying a partial interpretation (possibly conditioned on another interpretation) is done through the special directive \paspi{#query}.
\begin{pasp}
#query digit(0, 4).          
#query sum(8)|not both_even. 
\end{pasp}
The \paspi{undef} keyword may be used within \paspi{#query} for querying the probability of an atom being undefined under the L-stable or partial semantics.
Each query returns either a precise probability if under the max-entropy semantics, or a pair of probabilities encoding lower and upper bounds if under the credal semantics.

In order to define the semantics of the program, a special directive \paspi{#semantics} may be used.
It must receive at least one of the supported logic or probabilistic semantics, i.e.\ stable model, partial, L-stable, max-entropy or credal semantics.
If none is given, {\sysname} defaults to the credal and stable models semantics.
If only one is given, the missing one is set to its respective default semantics.
\begin{pasp}
#semantics lstable, maxent. 
#semantics credal, partial. 
#semantics stable.          
#semantics maxent.          
\end{pasp}

The \paspi{#learn} directive specifies the learning procedure to take place in the program, and receives as parameters the training dataset with the observed atoms, as well as the learning parameters such as learning rate, number of epochs and batch size.
\begin{pasp}
#learn @observed_atoms, lr = 0.001, niters = 5.
\end{pasp}

\subsection{Inference}

The most typical inference one draws with neuro-probabilistic logic programming models is to compute the probability of a query atom, possibly conditional on some evidence.
That is, if $\set{q}=\{q_1,\ldots,q_m\}$ and $\set{e}=\{e_1,\dotsc,e_n\}$ are disjoint set of literals, then we are usually interested in computing
\[
\pr(\set{q} | \set{e}) = \frac{\sum_{I \models \set{q},\set{e}} \pr(\set{q},\set{e})}{\sum_{I \models \set{e}} \pr(\set{e})} \, ,
\]
for some or all probability models $\pr$.

{\sysname} provides exact inference by enumerating total choices and using \textsc{clingo}'s solver to enumerate all models for each induced ASP program.
Performing exact inference by that exhaustive approach, be it under the max-ent or credal semantics, limits the scalability of inference to programs with few NADRs and ADRs.

More scalable approximate inference based on knowledge compilation \cite{Totis2021SMProbLogSM}, sampling \cite{pasocs,pasta} and variational methods are planned features for future versions of {\sysname}, which is currently in early stage development.

\subsubsection{Partial, L-stable and \textsc{SMProbLog} semantics}

Internally, {\sysname} only accepts the stable model semantics when performing inference or learning.
To enable support to different semantics, we implement translation procedures to the stable model semantics.

The partial semantics in {\sysname} is implemented via the translation described in \cite{janhunen06}.
In a nutshell, {\sysname} creates an auxiliary atom and rule for each non-probabilistic atom in the program and duplicates logic rules in order to allow undefined values for non-probabilistic atoms.
The L-stable semantics is implemented by checking, at each total choice, if there exists a stable model for the program: in the positive case, {\sysname} performs inference over the stable models of such a program, otherwise it queries from the translated program's partial stable models.

We also implement the \textsc{SMProbLog} semantics, introduced in \citet{Totis2021SMProbLogSM}.
The main difference between the L-stable and \textsc{SMProbLog}'s semantics is how to deal with undefined atoms.
While in L-stable a model may contain undefined, true and false atoms, in \textsc{SMProbLog}, if an atom is set to undefined in a model, then all atoms in this model must also be undefined.

\subsubsection{Maximum entropy semantics}

If the max-ent semantics is selected, it is sufficient to simply add up the (uniform) probabilities of each model that is consistent with the query; this is the same procedure done in \citet{yang20}. More formally, the probability of some observation $O$ under the max-ent semantics is given by
\begin{equation}
\pr(O)=\sum_{\theta\in\Theta}\pr(\theta)\cdot\frac{N(I_\theta\models O)}{N(\theta)},
\end{equation}
where $\Theta$ is the set of all total choices, and $N(I_\theta\models O)$ and $N(\theta)$ return respectively the number of stable models that are consistent with both $\theta$ and $O$, and the number of stable models consistent with $\theta$.

{\sysname} computes $N(I_\theta\models O)$ and $N(\theta)$ by calling \textsc{clingo}'s solver and counting, for each total choice $\theta$, how many models are consistent with observation $O$ and how many models are in total.
The probability $\pr(\theta)$ is easily computable by simply multiplying the probabilities of each probabilistic and neural component (i.e.\ probabilistic and neural facts, rules and annotated disjunctions), as we assume them to be marginally independent from each other.

\subsubsection{Credal semantics}

For the credal semantics, one is interested in the interval of all probabilities $\pr(\set{q}|\set{e})$ obtained by some probability model.
That interval can be described by its lower and upper values, which in {\sysname} are obtained by the exact algorithm described in \citet{cozman20}.
In short, given query $\set{q}=\{q_1,\ldots,q_m\}$ and evidence $\set{e}=\{e_1,\ldots,e_n\}$ literals, we compute the lower $\underline{\pr}(\set{q}|\set{e})$ and upper $\overline{\pr}(\set{q}|\set{e})$ probabilities by iterating over each total choice $\theta$ and counting the models where ($a$) every model satisfies both $\set{q}$ and $\set{e}$, ($b$) some model satisfies both $\set{q}$ and $\set{e}$, ($c$) every model satisfies $\set{e}$ but does not satisfy some value in $\set{q}$, and ($d$) some model satisfies $\set{e}$ but does not satisfy some value in $\set{q}$.
The credal interval is then $[0, 0]$ if $b+c=0$ and $d>0$, $[1,1]$ if $a+d=0$ and $b>0$, and $[a/(a+d),b/(b+c)]$ otherwise.

Credal facts in {\sysname} are only available when the credal semantics is selected.
To perform inference with credal facts, {\sysname} constructs four multilinear polynomials corresponding to $a$, $b$, $c$ and $d$; each term is a total choice $\theta$, each coefficient is the probability of $\theta$, and variables in the polynomial are $x$ if $X=1$ in $\theta$ or $1-x$ otherwise.
The domain of the polynomial is the cartesian product of all pairs of lower and upper probabilities in credal facts.
The functions $a(\set{x})/(a(\set{x})+d(\set{x}))$ and $b(\set{x})/(b(\set{x})+c(\set{x}))$ are then optimized in order to find the two global minimum and maximum respectively, with the first amounting to the lower and the second the upper probabilities of the queries.

\subsection{Parameter learning}

{\sysname} currently implements three parameter learning rules for the max-ent stable model semantics: (i) a fixed-point learning procedure for non-neural programs, (ii) a Lagrange multiplier derivation for gradient ascent, and (iii) an implementation of \textsc{NeurASP}'s learning procedure \cite{yang20}.
How to learn the parameters of programs in partial or least undefined stable model semantics either under the max-ent or credal semantics is an open problem.

We now describe the first two learning rules, which as far as we are aware, are novel in the literature.
Both rules provide rules for maximizing the log-likelihood $\mathcal{L}(\set{O}) = \sum_{O\in\set{O}}\log\pr(O)$ of a set of observations $\set{O}$ with respect to the parameters $\set{p}$ of the program, which are the probabilities that annotate ADRs.

\subsubsection{Fixed-point parameter learning}

We start with the fixed-point learning procedure, which can be used when we have only ADRs (but no NADR). 
\begin{restatable}{prop}{fixedpoint}\label{thm:fixedpoint}
    Let $\mathbb{P}(X=x)$ be the probability of a specific probabilistic component $X$ we wish to learn from the set of observations $\set{O}$.
    If the iterated application of the rule
    \begin{equation}
        \pr(X=x)=\frac{1}{|\set{O}|}\cdot\sum_{O\in\set{O}}\frac{\pr(X=x,O)}{\pr(O)}.
    \end{equation}
    converges, then it does so to a critical point of the log-likelihood function.
\end{restatable}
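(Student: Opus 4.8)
The plan is to show that the parameter settings left invariant by the update rule coincide exactly with the stationary points of the log-likelihood over the admissible parameter domain; then any limit of a convergent run of the iteration is necessarily such a stationary point. Since the parameters $\set{p}$ are the probabilities annotating the ADRs, and those attached to a single component $X$ must sum to one, I would treat this as a constrained optimization: maximize $\mathcal{L}(\set{O})=\sum_{O\in\set{O}}\log\pr(O)$ subject to $\sum_x \pr(X=x)=1$ for every probabilistic component $X$. First I would form the Lagrangian
\[
\Lambda = \sum_{O\in\set{O}}\log\pr(O) - \sum_X \lambda_X\Bigl(\sum_x \pr(X=x)-1\Bigr),
\]
and characterize the critical points of the constrained problem through the conditions $\partial\Lambda/\partial\pr(X=x)=0$.

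The observation that makes the computation tractable is that, under the max-ent semantics, $\pr(O)=\sum_{\theta\in\Theta}\pr(\theta)\,w_\theta(O)$ with $w_\theta(O)=N(I_\theta\models O)/N(\theta)$ a purely combinatorial weight independent of $\set{p}$, while $\pr(\theta)=\prod_r \pr(r{=}\theta(r))$ factorizes over components. Hence $\pr(O)$ is multilinear in the parameters, and in particular linear in each $\pr(X=x)$. Differentiating a single total choice gives $\partial\pr(\theta)/\partial\pr(X=x)=\pr(\theta)/\pr(X=x)$ when $\theta$ selects $x$ for $X$ and zero otherwise, so summing over total choices yields
\[
\frac{\partial\pr(O)}{\partial\pr(X=x)}=\frac{1}{\pr(X=x)}\sum_{\theta:\theta(X)=x}\pr(\theta)\,w_\theta(O)=\frac{\pr(X=x,O)}{\pr(X=x)}.
\]
Substituting this into $\partial\Lambda/\partial\pr(X=x)=0$ yields the stationarity equation $\pr(X=x)\,\lambda_X=\sum_{O}\pr(X=x,O)/\pr(O)$.

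It then remains to eliminate the multiplier. Summing the stationarity equation over all values $x$ of $X$ and using $\sum_x\pr(X=x,O)=\pr(O)$ (each total choice assigns exactly one value to $X$, so marginalizing recovers $\pr(O)$) collapses the right-hand side to $\sum_O \pr(O)/\pr(O)=|\set{O}|$, which together with the normalization constraint forces $\lambda_X=|\set{O}|$. Dividing the stationarity equation through by $\lambda_X$ then reproduces exactly the update rule in the statement. This shows that fixed points of the iteration and stationary points of $\mathcal{L}$ under the simplex constraints are the same set; since the statement only asserts the conclusion under the hypothesis of convergence, no monotonicity or contraction argument (as in a full EM analysis) is needed.

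I expect the main obstacle to be careful bookkeeping rather than deep difficulty: one must argue rigorously that $\pr(O)$ is genuinely linear in each $\pr(X=x)$, i.e.\ that a given parameter occurs at most once in each product $\pr(\theta)$ and that $w_\theta(O)$ is parameter-free, and then handle the degenerate cases (parameters on the boundary of the simplex, where $\pr(X=x)=0$ makes the division merely formal, and observations with $\pr(O)=0$) so that the critical-point characterization is not vacuous. A secondary point worth stating explicitly is that ``critical point'' here means a stationary point of the constrained problem on the product of simplices, as established through the Lagrangian, not of the unconstrained likelihood.
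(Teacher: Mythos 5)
Your proposal is correct, but it takes a genuinely different route from the paper's own proof. The paper handles the normalization constraint by reparametrizing with a softmax, $\pr(X=x)=e^{w_x}/\sum_{x'}e^{w_{x'}}$, differentiating $\mathcal{L}$ with respect to the unconstrained weights $w_x$, and grinding through the quotient rule: the derivative splits into a term over total choices selecting $x$ and a term over choices selecting $\overline{x}\neq x$, which recombine into $\sum_{O}\frac{1}{\pr(O)}\bigl[\pr(X=x,O)-\pr(X=x)\pr(O)\bigr]$; setting this to zero gives the fixed-point equation. You instead keep the raw parameters, exploit multilinearity of $\pr(O)$ in each $p_x$ to get $\partial\pr(O)/\partial p_x=\pr(X=x,O)/p_x$ in one line, and eliminate the multiplier via $\sum_x\pr(X=x,O)=\pr(O)$, forcing $\lambda_X=|\set{O}|$. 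The two stationarity conditions coincide on the interior of the simplex, so both characterize the same set of critical points. What each buys: the softmax route gets positivity of the parameters for free and keeps ``critical point'' in an unconstrained sense, at the cost of messier calculus; your Lagrangian route is algebraically cleaner and makes the normalizer $\lambda_X=|\set{O}|$ explicit (the familiar EM-style denominator), but you must, as you correctly note, interpret ``critical point'' as constrained stationarity and handle boundary cases ($p_x=0$ or $\pr(O)=0$) separately — cases the paper silently excludes via the softmax and the finiteness of the logarithm. Amusingly, your technique is essentially the one the paper uses for its \emph{other} result (\cref{thm:lagrangian}), so your proof unifies the two propositions under a single method. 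You are also more explicit than the paper about the final logical step — that a limit of a convergent iteration is a fixed point of the (continuous) update map, and that fixed points coincide with constrained stationary points — which the paper leaves implicit after deriving the stationarity equation.
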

The marginal $\pr(X=x,O)$ can be computed by counting the models consistent with both the observation $O$ and $X=x$ and weighting over the probability of the total choices.

\subsubsection{Lagrangian parameter learning}\label{sec:lagrange}

We now derive an update rule that applies also in the presence of NADRs, and is an alternative to \textsc{NeurASP}'s  learning rule \cite{yang20}. 
To better understand the need of our  alternative parameter learning method, we must first understand the shortcomings of the \textsc{NeurASP}'s  learning rule.
The rule updates parameters by $\set{p} \gets \set{p} - \eta \nabla_{\rho}\mathcal{L}(O)$, where $\eta$ is a learning rate and the gradient components are:
\begin{align} \label{na-rule}
\begin{split}
    &\frac{\partial}{\partial p_x}\mathcal{L}(O)= \\
    & \quad \frac{1}{\pr(O)}\sum_{\theta_x}\frac{\pr(\theta_x)}{\pr(X=x)}\cdot\frac{N(I_{\theta_x}\models O)}{N(\theta_x)} \\
    & \qquad - \sum_{\substack{\overline{x},\,\overline{x}\neq x}}\frac{1}{\pr(O)}\sum_{\theta_{\overline{x}}}\frac{\pr(\theta_{\overline{x}})}{\pr(X=\overline{x})}\cdot\frac{N(I_{\theta_{\overline{x}}}\models O)}{N(\theta_{\overline{x}})}.
\end{split}
\end{align}
The intuition is that, interpretations that are consistent with $O$ increase the value of the derivative, while interpretations that are not decrease it.

Note, however, that the sum of the updates over all the parameters $p_x$ of a probabilistic component $X$ is only zero when $X$ is binary, which means that rule \eqref{na-rule} can produce estimates that are outside the feasible set of valid parameters (i.e., they are not probability distributions).
This issue can be mitigated by either projecting the parameters back to the feasible set or ensuring that parameter updates lie within the feasible set, for instance by using a softmax layer.
To avoid this issue, we instead constrain parameters to remain within the feasible set by employing Lagrange multipliers.

\begin{restatable}{prop}{lagrangian}\label{thm:lagrangian}
    The constrained derivative of the log-likelihood function with respect to the probability $\pr(X=x)=p_x$ of a probabilistic component $X$ is 
    \begin{align} \label{lagrangian}
    \begin{split}
        &\frac{\partial}{\partial p_x}\mathcal{L}(O)=\\
        & \, \left(1-\frac{1}{m}\right)\frac{1}{\pr(O)}\sum_{\theta_x}\frac{\pr(\theta_x)}{\pr(X=x)}\cdot\frac{N(I_{\theta_x}\models O)}{N(\theta_x)}\\
        & \; -\frac{1}{m}\sum_{\substack{\overline{x},\,\overline{x}\neq x}}\frac{1}{\pr(O)}\sum_{\theta_{\overline{x}}}\frac{\pr(\theta_{\overline{x}})}{\pr(X=\overline{x})}\cdot\frac{N(I_{\theta_{\overline{x}}}\models O)}{N(\theta_{\overline{x}})}.       
    \end{split}
    \end{align}
    where $m$ is the number of possible values $X$ can take.
\end{restatable}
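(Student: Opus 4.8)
The plan is to obtain \eqref{lagrangian} as the projection of the ordinary (unconstrained) gradient of the log-likelihood onto the tangent space of the normalization constraint $\sum_{x} p_x = 1$, with the Lagrange multiplier playing the role of the projection offset. To keep the algebra compact I would first abbreviate the per-value score
\[
g_x \;=\; \frac{1}{\pr(O)}\sum_{\theta_x}\frac{\pr(\theta_x)}{\pr(X=x)}\cdot\frac{N(I_{\theta_x}\models O)}{N(\theta_x)},
\]
so that the claim is exactly $\partial\mathcal{L}(O)/\partial p_x = (1-\tfrac1m)g_x - \tfrac1m\sum_{\overline x\neq x} g_{\overline x}$, equivalently the mean-subtracted form $g_x - \tfrac1m\sum_{x'} g_{x'}$.

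First I would establish that the \emph{unconstrained} derivative equals $g_x$. Starting from the max-ent expression $\pr(O)=\sum_{\theta}\pr(\theta)\,N(I_\theta\models O)/N(\theta)$ given above, I would partition the total choices by the value they assign to $X$ and write $\pr(\theta_y)=p_y\, q(\theta_y)$, where $q(\theta_y)$ collects the probabilities of all components other than $X$ and is therefore independent of $p_x$; the counts $N(\cdot)$ are purely combinatorial and also independent of $p_x$. Differentiating term by term then leaves only the $\theta_x$ terms, giving $\partial\pr(O)/\partial p_x=\sum_{\theta_x} q(\theta_x)\,N(I_{\theta_x}\models O)/N(\theta_x)$, and dividing by $\pr(O)$ (the chain rule for $\log$) yields $\partial\mathcal{L}(O)/\partial p_x = g_x$. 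This is the genuine free gradient; the \textsc{NeurASP} rule \eqref{na-rule} differs from it, which is precisely why its iterates leave the simplex.

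Next I would impose the normalization constraint $h(\set{p}) = \sum_x p_x - 1 = 0$ via a Lagrange multiplier, forming $\Lambda = \mathcal{L}(O) - \lambda\, h(\set{p})$ and reading off the constrained derivative $\partial\Lambda/\partial p_x = g_x - \lambda$. The multiplier is then fixed by requiring the update direction to be feasible to first order, i.e.\ to lie in the tangent space $\{\,\delta : \sum_x \delta_x = 0\,\}$ of the constraint surface; summing $g_x-\lambda$ over the $m$ values of $X$ and setting the result to zero gives $\lambda = \tfrac1m\sum_{x'} g_{x'}$. Substituting this $\lambda$ and splitting the mean into its $x$ and $\overline x\neq x$ parts turns $g_x - \tfrac1m\sum_{x'}g_{x'}$ into $(1-\tfrac1m)g_x - \tfrac1m\sum_{\overline x\neq x}g_{\overline x}$, which is precisely \eqref{lagrangian}; by construction its sum over $x$ vanishes, so gradient steps now preserve $\sum_x p_x=1$.

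The only genuinely delicate step is justifying the specific value $\lambda=\tfrac1m\sum_{x'}g_{x'}$: in a textbook Lagrangian analysis $\lambda$ is pinned down only at a stationary point, whereas here I need it to define a descent direction away from stationarity. I would resolve this by framing $\lambda$ as the orthogonal projection of the free gradient onto $\mathrm{span}\{\mathbf 1\}$ — equivalently, $g_x-\lambda$ is the component of $\nabla\mathcal{L}$ orthogonal to the constraint normal $\nabla h = \mathbf 1$ — which both uniquely determines $\lambda$ and guarantees that the resulting direction is the feasible gradient agreeing with the true gradient along the constraint manifold. I would also note explicitly that this treats only the equality (normalization) constraint; the inequality constraints $p_x\ge 0$ are enforced separately (e.g.\ by clipping or reparametrization) and do not affect the interior stationarity computation above.
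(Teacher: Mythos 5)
Your proposal is correct and follows essentially the same route as the paper's own proof: both compute the free gradient (observing that only the total choices $\theta_x$ consistent with $X=x$ contribute, since the remaining factors and the model counts do not depend on $p_x$), form the Lagrangian $\mathcal{L}-\lambda(\sum_x p_x - 1)$, fix $\lambda$ by requiring the constrained derivatives to sum to zero (giving $\lambda=\tfrac1m\sum_{x'}g_{x'}$), and split the mean to obtain the $(1-\tfrac1m)$ and $\tfrac1m$ factors. Your orthogonal-projection framing of $\lambda$ is a cleaner justification of the step the paper handles with the informal remark that ``the sum of all derivatives with respect to $X$ must sum to zero,'' but it is the same argument in substance.
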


Interestingly, \eqref{lagrangian} yields a similar expression to \eqref{na-rule}, with the only distinction being the factors $1-\frac{1}{m}$ and $\frac{1}{m}$.
Thus, when $m=2$, the Lagrangian rule is equivalent to halving the learning rate of \textsc{NeurASP}'s rule.
For $m > 2$, rule \eqref{lagrangian} assigns more weight to the probability of interpretations consistent with the observation, and less weight to its complement.
Note that this is more sensible, since the latter sums over more terms than the former.

The extension of \eqref{lagrangian} to the neural case is trivial; by applying the chain rule on the derivative of the log-likelihood with respect to the output $p_x$ of the neural component $X$, we easily find that the resulting gradient is \eqref{lagrangian} multiplied by the derivative of the neural network with respect to network weights $\set{w}$
\begin{equation}
    \frac{\partial}{\partial\set{w}}\mathcal{L}(O)=\frac{\partial\mathcal{L}(O)}{\partial p_x}\frac{\partial p_x(\set{x})}{\partial\set{w}},
\end{equation}
where $\frac{\partial p_x(\set{x})}{\partial\set{w}}$ is the standard backward pass in a neural network.

\section{Experiments}

In this section, we present two preliminary experiments showcasing the {\sysname} system.
The first experiment compares the performance of our system against two competitors on the task of parameter learning in image classification.
The second showcases a possible use of the credal semantics in cautious ensemble classification.

\subsection{MNIST Addition}

We compare the performance of {\sysname} to \textsc{NeurASP}, \textsc{DeepProbLog} and a purely data-driven convolutional neural network (CNN) on the task of learning addition of MNIST image digits, a common distant supervision benchmark for NPLP  \cite{manhaeve21}.
This is a preliminary experiment, as {\sysname} is still in early development.
Given two unlabelled images (e.g.\tikz[baseline=-0.5ex]{\node{\includegraphics[width=1em]{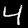}};}and\tikz[baseline=-0.5ex]{\node{\includegraphics[width=1em]{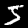}};}) of digits, and the corresponding atom (e.g. \paspi{sum(9)}) as a distant label, the program must learn to identify the sum of digits.

The {\sysname} program to perform this task is quite simple and short if we do not account for the Python code needed for processing the MNIST data.
The program in its entirety can be found in \cref{sec:mnist}.
\begin{pasp}
#python
def net(): ...       # neural network
def mnist_tr(i): ... # train images for i-th digit
def mnist_te(i): ... # test images for i-th digit
def labels(): ...    # sum(x) labels
#end.

input(0) &$\sim$& test(@mnist_te(0)), train(@mnist_tr(0)).
input(1) &$\sim$& test(@mnist_te(1)), train(@mnist_tr(1)).

?::digit(X, {0..9}) as @net :- input(X).
sum(Z) :- digit(0, X), digit(1, Y), Z = X+Y.

#semantics maxent.
#learn @labels, lr = 0.001, niters = 5, batch = 1000.
\end{pasp}

We briefly highlight the fact that the prior know-how needed to write a program in \textsc{NeurASP} or \textsc{DeepProbLog} can be a significant barrier to the widespread use of NPLPs.
Not only is the user required to have a good grasp of Python, but they must also have a significant understanding of the deep learning system used by the NPLP.
For instance, one might compare the equivalent programs in \textsc{NeurASP}\footnote{\url{https://github.com/azreasoners/NeurASP/blob/master/examples/mnistAdd/mnistAdd.py}} and \textsc{DeepProbLog}\footnote{\url{https://github.com/ML-KULeuven/deepproblog/blob/master/src/deepproblog/examples/MNIST/addition.py}} to ours in order to understand the steep learning curve of current NPLPs.

Going back to our preliminary experiment, we follow a similar methodology used in \textsc{NeurASP} \cite{yang20} and \textsc{DeepProbLog} \cite{manhaeve21} for the MNIST digit addition task.
Just like the aforementioned works, we split the original MNIST dataset in half, taking the first (resp.\ second) half as the images of the first (resp.\ second) digit; the labels observed by the program are the atoms corresponding to the sum of the labels of the two halves.
We use the same learning parameters for {\sysname}, \textsc{NeurASP}, \textsc{DeepProbLog} and CNN: a learning rate of $0.001$, batch size of $1000$, and the \textsc{Adam} optimizer for the neural components/networks \cite{kingma17}.

\begin{figure*}
    \resizebox{\textwidth}{!}{
    \begin{tikzpicture}
    \begin{axis}[
        xlabel={Epochs},
        ylabel={Accuracy (\%)},
        xtick={0,...,5},
        minor x tick num=1,
        minor y tick num=1,
        grid=both,
        no markers,
        every axis plot/.append style={ultra thick},
        name=boundary,
    ]
        \addplot[draw=palette3] table {cnn_sum.txt};\label{pgfplots:c4}
        \addplot[draw=palette2] table {neurasp_norm.txt};\label{pgfplots:c2}
        \addplot[draw=palette4] table {deepproblog_norm.txt};\label{pgfplots:c3}
        \addplot[draw=palette1] table {dpasp_corr_norm.txt};\label{pgfplots:c1}
        \addplot[draw=palette5] table {dpasp_500_corr_norm.txt};\label{pgfplots:c5}
    \end{axis}
    \end{tikzpicture}
    \begin{tikzpicture}
    \begin{axis}[
        xlabel={Epochs},
        xtick={0,...,5},
        minor x tick num=1,
        minor y tick num=1,
        grid=both,
        no markers,
        every axis plot/.append style={ultra thick},
        name=boundary,
    ]
        \addplot[draw=palette6] table {cnn_digit.txt};\label{pgfplots:c6}
        \addplot[draw=palette2] table {neurasp_nn_norm.txt};
        \addplot[draw=palette1] table {dpasp_nn_corr_norm.txt};
        \addplot[draw=palette4] table {deepproblog_nn_norm.txt};
        \addplot[draw=palette5] table {dpasp_500_nn_corr_norm.txt};
    \end{axis}
    \node[draw,thick,fill=white,inner sep=0pt,xshift=1cm,yshift=1.1cm] at (boundary.south) {
        \scriptsize
        \begin{tabular}{clc}
            & & \textsc{Time}\\
            \ref{pgfplots:c1} & {\sysname} 1000 batch & 14s\\
            \ref{pgfplots:c5} & {\sysname} 500 batch & 20s\\
            \ref{pgfplots:c2} & \textsc{NeurASP} & 9m 17s\\
            \ref{pgfplots:c3} & \textsc{DeepProbLog} & 17m 22s\\
            \ref{pgfplots:c4} & \textsc{CNN sum} & 29s\\
            \ref{pgfplots:c6} & \textsc{CNN digit} & 30s\\
        \end{tabular}
    };
    \end{tikzpicture}
    }
    \caption{Sum and digit classification accuracy and training time for {\sysname}, \textsc{NeurASP}, \textsc{DeepProbLog} and CNN.
    On the left, accuracy per iteration of classifying sums; on the right, accuracy of learned networks on classifying digits.}
    \label{fig:mnist}
\end{figure*}
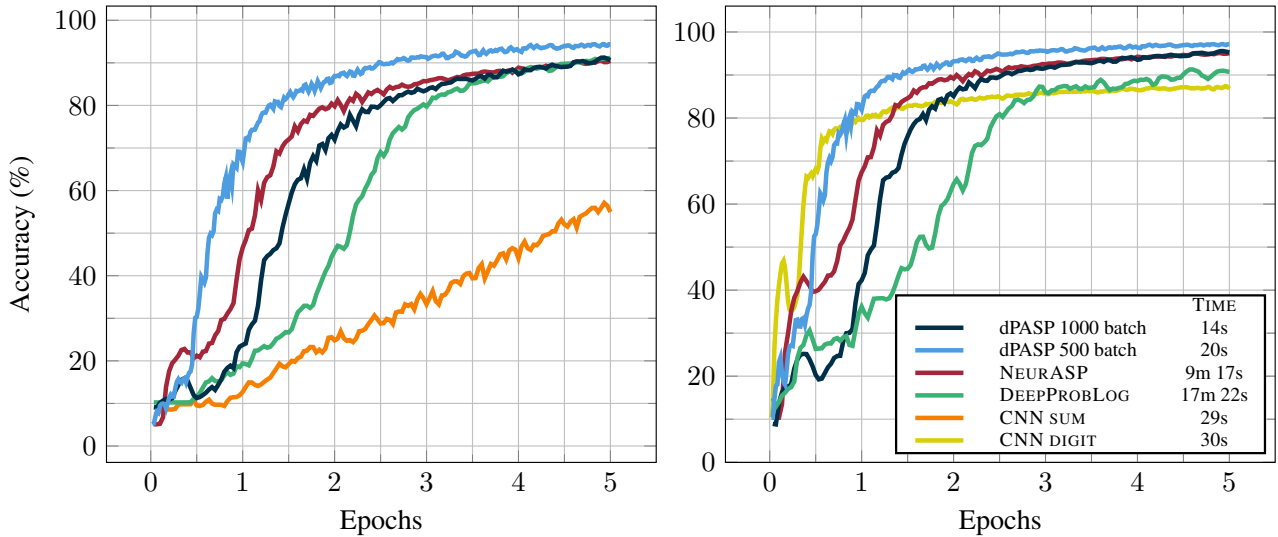

\Cref{fig:mnist} shows a comparison of both the performance in terms of classification accuracy, as well as training time.
The plot on the left compares the accuracy of programs in classifying the correct sum of digits, while the plot on the right shows the digit classification accuracy of the embedded neural networks in the programs while they learn to classify the sum of digits. 
Curve \textsc{CNN sum} corresponds to the performance of evaluating a CNN whose input is a single image consisting of concatenating the two digits and whose output are the probabilities of the 19 possible two-digit sum values; \textsc{CNN digit} is the accuracy of a single digit classification network under the same parameter conditions as {\sysname}, \textsc{NeurASP} and \textsc{DeepProbLog}.

Interestingly, both purely data-driven \textsc{CNN} approaches performed poorly compared to NPLPs.
In particular, \textsc{CNN sum} struggled to even break 50\% accuracy, while \textsc{CNN digit} quickly converged to the 80\% mark, below that of NPLPs.
We again stress the fact that these results were obtained by subjecting all systems to the same learning parameters.

Comparing {\sysname} against \textsc{DeepProbLog} and \textsc{NeurASP}, we find that \textsc{NeurASP} achieves better accuracy faster, although {\sysname} eventually catches up.
This difference between {\sysname} and \textsc{NeurASP} might be explained by the correction factor discussed in \Cref{sec:lagrange}; the factors involved in the Lagrangian optimization slow down learning, as the gradient is diminished due to the correction.

Of note is the significant difference in training time for the four methods, with a surprising gap between {\sysname} and CNN.
We conjecture that the main factor that explains this discrepancy is implementation overhead: {\sysname} is mostly written in C, with only the grammar parsing part of the language implemented in Python; in contrast, even though most of the computation in a pure Python script using PyTorch or any other deep learning framework is also done in C, a lot of the boilerplate code runs in Python.
Thus, the careful implementation of highly optimized C code might make up for the use of logical inference routines in small problems like MNIST addition.

Given that {\sysname} is quite faster compared to the other approaches, we argue that this speed-up is a key advantage of {\sysname} (when the number of probabilistic and neural components is low) and should be exploited.
With this in mind, we lower the batch size of {\sysname} in order to show how, by taking advantage of the careful optimized implementation in {\sysname}, we might achieve better performance by slightly increasing training time.
The {\sysname} batch 500 curve in \Cref{fig:mnist} shows the impact of performance in terms of accuracy when halving the batch size, which causes only a slight increase (6 seconds) in training time.

\subsection{Ensemble Classification}

The following program considers pooling probabilistic predictions of different forecasters without information about accuracy of each forecaster.
The neural predicates \paspi{f} and \paspi{g} are pre-trained MNIST digit classifiers.
Our goal is to perform a digit classification from a two-classifier ensemble and compare a precise strategy that gives equal weight to each of the two classifiers, against a credal strategy that employs a more cautious approach.

\begin{pasp}
data(x) &$\sim$& test(@mnist_test), train(@mnist_train).
class(f). class(g).
!::f(X,{0,&$\ldots$&,9}) as @net1 :- data(X).
!::g(X,{0,&$\ldots$&,9}) as @net2 :- data(X).
pred(f,Y) :- f(x,Y).  pred(g,Y) :- g(x,Y).
hit(f); hit(g).
hit(N2)  :- pred(N1,Y), pred(N2,Y), N1 != N2, hit(N1).
miss(N2) :- pred(N1,Y), pred(N2,Y), N1 != N2, miss(N1).
miss(N1); miss(N2) :- pred(N1,Y), pred(N2,Z), Y != Z.
miss(N) :- not hit(N), class(N). 
digit(Y) :- pred(N,Y), hit(N).
#semantics credal. 
#semantics maxent. 
#query digit(0). ... #query digit(9).
\end{pasp}

The program encodes the assumption that some of the predictions made by the  forecaster must be correct (hit), and rules outs inconsistent assessments (e.g., two different predictions being considered both correct).

Using our framework and the previous logic program, we can perform either precise or credal inference.
The precise inference amounts to simply computing the probability of each class under the maximum entropy semantics and taking the most likely one.
For the credal semantics, we employ a max-max decision strategy to classify digits; for each digit's lower and an upper probabilities, we select the upper (max) as the probability of being that digit.
Once all upper probabilities have been selected, we then select the class with the highest (max) probability.

\begin{table}
    \centering
    \begin{tabular}{|r|cc|c|} \hline
          & Max-Ent  & Credal          & \# examples \\ \hline
      0   & \textbf{98.36}  & 97.95 & 980 \\
      1   & 98.14  & \textbf{98.23} & 1135\\
      2   & 92.24 & \textbf{92.92} & 1032\\
      3   & \textbf{93.56} & 92.57 & 1010\\
      4   & \textbf{95.92} & 95.72 & 982\\
      5   & 92.48 & \textbf{93.83} & 892\\
      6   & 96.13 & \textbf{96.55} & 958\\
      7   & 94.06 & \textbf{94.35} & 1028\\
      8   & 89.42 & \textbf{89.93} & 974\\
      9   & 46.77 & \textbf{47.77} & 1009\\\hline
      Total & 89.73 & \textbf{89.99} & 10000 \\ \hline
    \end{tabular}
    \caption{Accuracy when classifying each MNIST digit under the max-ent or credal semantics. Best accuracy in bold.}
    \label{tab:precise-maxmax-comparison}
\end{table}

The accuracy for detecting each digit using the two strategies is shown in Table \ref{tab:precise-maxmax-comparison}.
We can see that a more cautious ensemble strategy based on the credal semantics can achieve slightly better performance in terms of accuracy compared to a simple uniform weighting of components.

\section{Conclusion}

We have presented {\sysname}, a new and flexible framework for neurosymbolic reasoning based on probabilistic logic programming.
The framework extends the answer set programming declarative language with probabilistic and neural facts that allows the specification of uncertain knowledge and the tight integration of deep perception models (image classifiers, named entity recognizers, etc) with logic reasoning and constraint solving.

Unlike other similar systems such as \textsc{NeurASP} and \textsc{DeepProbLog}, the framework implementation provides several different semantics for the logic and probabilistic parts, and a more friendly interface between machine learning components (e.g., PyTorch models) and the logic specification.

The system is also relatively fast for learning certain class of models, as illustrated by preliminary empirical results.
This is due to a careful software implementation, which we release as free and open source at \url{https://kamel.ime.usp.br/dpasp}.

There is still much to achieve to make the system more broadly applicable.
In particular, more efficient learning and inference routines need to be devised to scale to larger domains.

\appendix

\section{Proofs}

\fixedpoint*
\begin{proof}
    The log-likelihood function of the program is given by
    \begin{equation}\label{thm:fixedpoint:ll}
        \mathcal{L}(\set{O})=\sum_{O\in\set{O}}\log\sum_{\theta\in\Theta}\pr(\theta)\cdot\frac{N(I_\theta\models O)}{N(\theta)}.
    \end{equation}
    We wish to constrain the probabilities of a probabilistic component $X$ to $\sum_{x\in\mathcal{X}}\pr(X=x)=1$ and $\pr(X=x)>0$, where $\mathcal{X}$ is the set of all possible values $X$ can take.
    To do this, instead of directly computing the derivative of $\mathcal{L}$ with respect to $\pr(X=x)$, we instead compute $\frac{\partial}{\partial w_x}\mathcal{L}$, where $w_x\in\mathbb{R}$ is unconstrained and define
    \begin{equation}
        \pr(X=x)=\frac{e^{w_x}}{\sum\limits_{x'\in\mathcal{X}}e^{w_{x'}}},
    \end{equation}
    i.e.\ we optimize with respect to a softmax function instead of directly working with the probabilities.
    With this in mind, the derivative of the log-likelihood function is given by
    \begin{align}
    \begin{split}
        \frac{\partial}{\partial w_x}&\mathcal{L}(\set{O})=\\
        &\sum_{O\in\set{O}}\frac{1}{\pr(O)}\sum_{\theta\in\Theta}\frac{e^{w_x}}{\sum\limits_{x'\in\mathcal{X}}e^{w_{x'}}}\cdot\pr(\theta_{-x})\cdot\frac{N(I_\theta\models O)}{N(\theta)},
    \end{split}
    \end{align}
    where $\pr(\theta_{-x})$ is the probability of the total choice $\theta$ excluding the probability of $X$, or more formally
    \begin{equation}
        \pr(\theta_{-x})\coloneqq\prod_{\substack{(Y=y)\in\theta\\Y\neq X}}\pr(Y=y)=\frac{\pr(\theta)}{\pr(X=x)}.
    \end{equation}
    We may then split \eqref{thm:fixedpoint:ll} into two terms: one where the total choices $\Theta_x$ agree with the weight $w_x$ to be derived, and another where total choices $\Theta_{\overline{x}}$ choose other values $X=\overline{x}$, $\overline{x}\neq x$ for $X$
    \begin{align}\label{thm:fixedpoint:dx}
    \begin{split}
        &\frac{\partial}{\partial w_x}\mathcal{L}(\set{O})=\sum_{O\in\set{O}}\frac{1}{\pr(O)}\cdot\\
        &\overbrace{\left(\frac{\partial}{\partial w_x}\sum_{\theta_x\in\Theta_x}\frac{e^{w_x}}{\sum\limits_{x'\in\mathcal{X}}e^{w_{x'}}}\cdot\pr(\theta_{-x})\cdot\frac{N(I_{\theta_x}\models O)}{N(\theta_x)}\right.}^{\eqref{thm:fixedpoint:first-dx}}\\
        &+\underbrace{\frac{\partial}{\partial w_x}\left.\sum_{\theta_{\overline{x}}\in\Theta_{\overline{x}}}\frac{e^{w_{\overline{x}}}}{\sum\limits_{x'\in\mathcal{X}}e^{w_{x'}}}\cdot\pr(\theta_{-\overline{x}})\cdot\frac{N(I_{\theta_{\overline{x}}}\models O)}{N(\theta_{\overline{x}})}\right)}_{\eqref{thm:fixedpoint:second-dx}}.
    \end{split}
    \end{align}
    For the sake of clarity, let us call $N_{\theta}^O=\frac{N(I_\theta\models O)}{N(\theta)}$.
    We may simplify the first term in \eqref{thm:fixedpoint:dx} to
    \begin{align}
    \begin{split}\label{thm:fixedpoint:first-dx}
        &\sum_{\theta_x\in\Theta_x}\frac{e^{w_x}\cdot\sum\limits_{x'\in\mathcal{X}}e^{w_{x'}} - e^{w_x}\cdot e^{w_x}}{\sum\limits_{x'\in\mathcal{X}}e^{w_{x'}}\cdot\sum\limits_{x'\in\mathcal{X}}e^{w_{x'}}}\cdot\pr(\theta_{-x})\cdot N_{\theta_x}^O=\\
        &\sum_{\theta_x\in\Theta_x}\left(\frac{e^{w_x}}{\sum\limits_{x'\in\mathcal{X}}e^{w_{x'}}}-\left(\frac{e^{w_{x'}}}{\sum\limits_{x'\in\mathcal{X}}e^{w_{x'}}}\right)^2\right)\cdot\pr(\theta_{-x})\cdot N_{\theta_x}^O=\\
        &\sum_{\theta_x\in\Theta_x}\pr(\theta_x)\cdot N_{\theta_x}^O-\sum_{\theta_x\in\Theta_x}\frac{e^{w_{x'}}}{\sum\limits_{x'\in\mathcal{X}}e^{w_{x'}}}\cdot\pr(\theta_x)\cdot N_{\theta_x}^O=\\
        &\pr(X=x,O)-\pr(X=x)\cdot\pr(X=x,O)
    \end{split}
    \end{align}
    and the second term to
    \begin{align}\label{thm:fixedpoint:second-dx}
    \begin{split}
        &-\sum_{\theta_{\overline{x}}\in\Theta_{\overline{x}}}\frac{e^{w_{\overline{x}}}}{\sum\limits_{x'\in\mathcal{X}}e^{w_{x'}}}\cdot\frac{e^{w_x}}{\sum\limits_{x'\in\mathcal{X}}e^{w_{x'}}}\cdot\pr(\theta_{-\overline{x}})\cdot N_{\theta_{\overline{x}}}^O=\\
        &-\sum_{\theta_{\overline{x}}\in\Theta_{\overline{x}}}\pr(X=x)\cdot\pr(\theta_{\overline{x}})\cdot N_{\theta_{\overline{x}}}^O=\\
        &\pr(X=x)\cdot\pr(X=\overline{x},O).
    \end{split}
    \end{align}
    Putting \eqref{thm:fixedpoint:first-dx} and \eqref{thm:fixedpoint:second-dx} together, we get
    \begin{align}\label{thm:fixedpoint:together}
    \begin{split}
        &\frac{\partial}{\partial w_x}\mathcal{L}(\set{O})=\sum_{O\in\set{O}}\frac{1}{\pr(O)}\bigl[\pr(X=x,O)-\\
        &\pr(X=x)\cdot\pr(X=x,O)-\pr(X=x)\cdot\pr(X=\overline{x},O)\bigr]=\\
        &\quad\sum_{O\in\set{O}}\frac{1}{\pr(O)}\bigl[\pr(X=x,O)-\pr(X=x)\cdot\pr(O)\bigr].
    \end{split}
    \end{align}
    By setting the derivative of the objective function to zero to find the critical point, we finally find that
    \begin{align}\label{thm:fixedpoint:final}
    \begin{split}
        &\sum_{O\in\set{O}}\frac{1}{\pr(O)}\bigl[\pr(X=x,O)-\pr(X=x)\cdot\pr(O)\bigr]=\\
        &\sum_{O\in\set{O}}\left(\frac{\pr(X=x,O)}{\pr(O)}-\pr(X=x)\right)=\\
        &\sum_{O\in\set{O}}\frac{\pr(X=x,O)}{\pr(O)}-|\set{O}|\cdot\pr(X=x)=0\\
        &\implies\pr(X=x)=\frac{1}{|\set{O}|}\sum_{O\in\set{O}}\frac{\pr(X=x,O)}{\pr(O)}.
    \end{split}
    \end{align}
\end{proof}

\lagrangian*
\begin{proof}
    Recall that the log-likelihood function is given by
    \begin{equation}
        \mathcal{L}(\set{O})=\sum_{O\in\set{O}}\log\sum_{\theta\in\Theta}\pr(\theta)\cdot\frac{N(I_\theta\models O)}{N(\theta)},
    \end{equation}
    and that the derivative of the log-likelihood function with respect to a probabilistic component $p_x=\pr(X=x)$ is given by the expression
    \begin{equation}\label{thm:lagrangian:lldx}
        \frac{\partial}{\partial p_x}\mathcal{L}(\set{O})=\sum_{O\in\set{O}}\frac{1}{\pr(O)}\frac{\partial}{\partial p_x}\sum_{\theta\in\Theta}\pr(\theta)\cdot\frac{N(I_\theta\models O)}{N(\theta)}.
    \end{equation}
    We define the objective function as the log-likelihood subject to the restriction that all probabilities over $X$ sum to one, that is, let $\mathcal{X}$ be the set of all possible values taken by $X$, then the new objective function is the log-likelihood with the added Lagrange multiplier $\lambda$
    \begin{equation}\label{thm:lagrangian:lagrangian}
        \hat{\mathcal{L}}(\{p_x\}_{x\in\mathcal{X}},\lambda,\set{O})=\mathcal{L}(\set{O})-\lambda\left(\sum_{x\in\mathcal{X}}p_x-1\right).
    \end{equation}
    Optimization must now take place within the new constrained log-likelihood $\hat{\mathcal{L}}$.
    To find the complete expression of \eqref{thm:lagrangian:lagrangian}, we must find the value of the Lagrange multiplier $\lambda$, which is achievable by noting that the sum of all derivatives with respect to $X$ must sum to zero
    \begin{align}
    \begin{split}\label{thm:lagrangian:gradient}
        &\sum_{x\in\mathcal{X}}\frac{\partial}{\partial p_x}\hat{\mathcal{L}}(\{p_x\}_{x\in\mathcal{X}},\lambda,\set{O})=\sum_{x\in\mathcal{X}}\left(\frac{\partial}{\partial p_x}\mathcal{L}(\set{O})-\lambda\right)=\\
        &\sum_{x\in\mathcal{X}}\frac{\partial}{\partial p_x}\mathcal{L}(\set{O})-|\mathcal{X}|\cdot\lambda\implies\lambda=\frac{1}{|\mathcal{X}|}\sum_{x\in\mathcal{X}}\frac{\partial}{\partial p_x}\mathcal{L}(\set{O}).
    \end{split}
    \end{align}
    Let us first revisit \eqref{thm:lagrangian:lldx}, as it appears as a term of the gradient of our new objective function.
    We may split \eqref{thm:lagrangian:lldx} into two terms, one where total choices $\Theta_x$ agree with the assignment $X=x$, and the other where total choices $\Theta_{\overline{x}}$ choose other values $X=\overline{x}$, $\overline{x}\neq x$
    \begin{align}\label{thm:lagrangian:split}
    \begin{split}
        \text{\eqref{thm:lagrangian:lldx}}&=\\
        &\sum_{O\in\set{O}}\frac{1}{\pr(O)}\left(\frac{\partial}{\partial p_x}\sum_{\theta_x\in\Theta_x}\pr(\theta_x)\frac{N(I_{\theta_x}\models O)}{N(\theta_x)}+\right.\\
        &\left.\frac{\partial}{\partial p_x}\sum_{\theta_{\overline{x}}\in\Theta_{\overline{x}}}\pr(\theta_{\overline{x}})\frac{N(I_{\theta_{\overline{x}}}\models O)}{N(\theta_{\overline{x}})}\right).
    \end{split}
    \end{align} 
    We now derive each coordinate of the gradient of \eqref{thm:lagrangian:lagrangian}
    \begin{align}
    \begin{split}
        &\frac{\partial}{\partial p_x}\hat{\mathcal{L}}(\{p_x\}_{x\in\mathcal{X}},\lambda,\set{O})=\frac{\partial}{\partial p_x}\mathcal{L}(\set{O})-\lambda=\\
        &\eqref{thm:lagrangian:split}-\frac{1}{|\mathcal{X}|}\sum_{x'\in\mathcal{X}}\frac{\partial}{\partial p_{x'}}\mathcal{L}(\set{O}).
    \end{split}
    \end{align}
    To further simplify the above expression, we must further develop the formula in \eqref{thm:lagrangian:split}. Within the constraints set by the Lagrange multiplier, we may then simplify \eqref{thm:lagrangian:split} as
    \begin{equation}
        \eqref{thm:lagrangian:split}=\sum_{O\in\set{O}}\frac{1}{\pr(O)}\sum_{\theta_x\in\Theta_x}\frac{\pr(\theta_x)}{\pr(X=x)}\cdot\frac{N(I_{\theta_x}\models O)}{N(\theta_x)},
    \end{equation}
    since the second term in \eqref{thm:lagrangian:split} cancels out to zero. Note that we do not need to write $p_{\overline{x}}$ as a function of $p_x$ (which would mean that the term would not be equal to zero), as the Lagrange multiplier is already taking into account this relationship.
    Finally,
    \begin{align}
    \begin{split}
        &\eqref{thm:lagrangian:gradient}=\sum_{O\in\set{O}}\frac{1}{\pr(O)}\sum_{\theta_x\in\Theta_x}\frac{\pr(\theta_x)}{\pr(X=x)}\cdot\frac{N(I_{\theta_x}\models O)}{N(\theta_x)}-\\
        &\frac{1}{|\mathcal{X}|}\sum_{x'\in\mathcal{X}}\sum_{O\in\set{O}}\frac{1}{\pr(O)}\sum_{\theta_{x'}\in\Theta_x}\frac{\pr(\theta_{x'})}{\pr(X=x')}\cdot\frac{N(I_{\theta_{x'}}\models O)}{N(\theta_{x'})}
    \end{split}
    \end{align}
    can be rearranged so that when $x'=x$, the first and second terms are subtracted, yielding the final form
    \begin{align}
    \begin{split}
        &\frac{\partial}{\partial p_x}\hat{\mathcal{L}}(O)=\\
        & \, \left(1-\frac{1}{|\mathcal{X}|}\right)\frac{1}{\pr(O)}\sum_{\theta_x\in\Theta_x}\frac{\pr(\theta_x)}{\pr(X=x)}\cdot\frac{N(I_{\theta_x}\models O)}{N(\theta_x)}\\
        & \; -\frac{1}{|\mathcal{X}|}\sum_{\substack{\overline{x},\,\overline{x}\neq x}}\frac{1}{\pr(O)}\sum_{\theta_{\overline{x}}\in\Theta_{\overline{x}}}\frac{\pr(\theta_{\overline{x}})}{\pr(X=\overline{x})}\cdot\frac{N(I_{\theta_{\overline{x}}}\models O)}{N(\theta_{\overline{x}})};
    \end{split}
    \end{align}
    call $m=|\mathcal{X}|$ and we get the expression in our claim.
\end{proof}

\section{{\sysname} MNIST Addition Program}\label{sec:mnist}

\begin{listing*}
\begin{pasp}

#python
import torch
import torchvision

class Net(torch.nn.Module):
  def __init__(self):
    super().__init__()
    self.encoder = torch.nn.Sequential(
      torch.nn.Conv2d(1, 6, 5),
      torch.nn.MaxPool2d(2, 2),
      torch.nn.ReLU(True),
      torch.nn.Conv2d(6, 16, 5),
      torch.nn.MaxPool2d(2, 2),
      torch.nn.ReLU(True)
    )
    self.classifier = torch.nn.Sequential(
      torch.nn.Linear(16 * 4 * 4, 120),
      torch.nn.ReLU(),
      torch.nn.Linear(120, 84),
      torch.nn.ReLU(),
      torch.nn.Linear(84, 10),
      torch.nn.Softmax(1)
    )

  def forward(self, x):
    x = self.encoder(x)
    x = x.view(-1, 16 * 4 * 4)
    x = self.classifier(x)
    return x

def digit_net(): return Net()

def mnist_data():
  train = torchvision.datasets.MNIST(root = "/tmp", train = True, download = True)
  test  = torchvision.datasets.MNIST(root = "/tmp", train = False, download = True)
  return train.data.float().reshape(len(train), 1, 28, 28)/255., train.targets, \
         test.data.float().reshape(len(test), 1, 28, 28)/255., test.targets

def normalize(X_R, Y_R, X_T, Y_T, mu, sigma):
  return (X_R-mu)/sigma, Y_R, (X_T-mu)/sigma, Y_T

train_X, train_Y, test_X, test_Y = normalize(*mnist_data(), 0.1307, 0.3081)
def pick_slice(data, which):
  h = len(data)//2
  return slice(h, len(data)) if which else slice(0, h)
def mnist_images_train(which): return train_X[pick_slice(train_X, which)]
def mnist_images_test(which): return test_X[pick_slice(test_X, which)]
def mnist_labels_train():
  labels = torch.concatenate((train_Y[:(h := len(train_Y)//2)].reshape(-1, 1),
                              train_Y[h:].reshape(-1, 1)), axis=1)
  return [[f"sum({x.item() + y.item()})"] for x, y in labels]
#end.

input(0) &$\sim$& test(@mnist_images_test(0)), train(@mnist_images_train(0)).
input(1) &$\sim$& test(@mnist_images_test(1)), train(@mnist_images_train(1)).

?::digit(X, {0..9}) as @digit_net with optim = "Adam", lr = 0.001 :- input(X).
sum(Z) :- digit(0, X), digit(1, Y), Z = X + Y.

#semantics maxent.
#learn @mnist_labels_train, lr = 1., niters = 5, alg = "lagrange", batch = 1000. 
\end{pasp}
\end{listing*}

\bibliography{refs}

\begin{thebibliography}{16}
\providecommand{\natexlab}[1]{#1}
\providecommand{\url}[1]{\texttt{#1}}
\expandafter\ifx\csname urlstyle\endcsname\relax
  \providecommand{\doi}[1]{doi: #1}\else
  \providecommand{\doi}{doi: \begingroup \urlstyle{rm}\Url}\fi

\bibitem[Azzolini et~al.(2023)Azzolini, Bellodi, and Riguzzi]{pasta}
Azzolini, D., Bellodi, E., and Riguzzi, F.
\newblock Approximate inference in probabilistic answer set programming for
  statistical probabilities.
\newblock In Dovier, A., Montanari, A., and Orlandini, A. (eds.), \emph{AIxIA
  2022 -- Advances in Artificial Intelligence}, pp.\  33--46, Cham, 2023.

\bibitem[Cozman \& Mauá(2020)Cozman and Mauá]{cozman20}
Cozman, F.~G. and Mauá, D.~D.
\newblock The joy of probabilistic answer set programming: Semantics,
  complexity, expressivity, inference.
\newblock \emph{International Journal of Approximate Reasoning}, 2020.

\bibitem[Fierens et~al.(2015)Fierens, Van~den Broeck, Renkens, Shterionov,
  Gutmann, Thon, Janssens, and de~Raedt]{fierens15}
Fierens, D., Van~den Broeck, G., Renkens, J., Shterionov, D., Gutmann, B.,
  Thon, I., Janssens, G., and de~Raedt, L.
\newblock Inference and learning in probabilistic logic programs using weighted
  boolean formulas.
\newblock \emph{Theory and Practice of Logic Programming}, 15:\penalty0
  358–401, 2015.

\bibitem[Gebser et~al.(2012)Gebser, Kaminski, Kaufmann, and Schaub]{gebser12}
Gebser, M., Kaminski, R., Kaufmann, B., and Schaub, T.
\newblock Answer set solving in practice.
\newblock In \emph{Answer Set Solving in Practice}, 2012.

\bibitem[Gebser et~al.(2017)Gebser, Kaminski, Kaufmann, and Schaub]{gebser17}
Gebser, M., Kaminski, R., Kaufmann, B., and Schaub, T.
\newblock Multi-shot {ASP} solving with clingo.
\newblock \emph{CoRR}, abs/1705.09811, 2017.

\bibitem[Janhunen et~al.(2006)Janhunen, Niemel\"{a}, Seipel, Simons, and
  You]{janhunen06}
Janhunen, T., Niemel\"{a}, I., Seipel, D., Simons, P., and You, J.-H.
\newblock Unfolding partiality and disjunctions in stable model semantics.
\newblock \emph{ACM Trans. Comput. Logic}, 7, 2006.

\bibitem[Kingma \& Ba(2017)Kingma and Ba]{kingma17}
Kingma, D.~P. and Ba, J.
\newblock Adam: A method for stochastic optimization, 2017.

\bibitem[Lifschitz(2019)]{lifschitz19}
Lifschitz, V.
\newblock \emph{Answer Set Programming}.
\newblock Springer Cham, 2019.

\bibitem[Manhaeve et~al.(2021)Manhaeve, Dumančić, Kimmig, Demeester, and {De
  Raedt}]{manhaeve21}
Manhaeve, R., Dumančić, S., Kimmig, A., Demeester, T., and {De Raedt}, L.
\newblock Neural probabilistic logic programming in deepproblog.
\newblock \emph{Artificial Intelligence}, 298, 2021.

\bibitem[Przymusinski(1991)]{przymusinski91}
Przymusinski, T.
\newblock Stable semantics for disjunctive programs.
\newblock \emph{New Generation Computing}, 9:\penalty0 401--424, 1991.

\bibitem[Rocha \& Cozman(2022)Rocha and Cozman]{rocha22}
Rocha, V. H.~N. and Cozman, F.~G.
\newblock {A Credal Least Undefined Stable Semantics for Probabilistic Logic
  Programs and Probabilistic Argumentation}.
\newblock In \emph{{Proceedings of the 19th International Conference on
  Principles of Knowledge Representation and Reasoning}}, 2022.

\bibitem[Sacc{\`a} \& Zaniolo(1997)Sacc{\`a} and Zaniolo]{sacca97}
Sacc{\`a}, D. and Zaniolo, C.
\newblock Deterministic and non-deterministic stable models.
\newblock \emph{Journal of Logic and Computation}, 7\penalty0 (5):\penalty0
  555--579, 1997.

\bibitem[Sato(1995)]{sato95}
Sato, T.
\newblock A statistical learning method for logic programs with distribution
  semantics.
\newblock In \emph{International Conference on Logic Programming}, 1995.

\bibitem[Totis et~al.(2021)Totis, Kimmig, and Raedt]{Totis2021SMProbLogSM}
Totis, P., Kimmig, A., and Raedt, L.~D.
\newblock Smproblog: Stable model semantics in problog and its applications in
  argumentation.
\newblock \emph{ArXiv}, abs/2110.01990, 2021.

\bibitem[Tuckey et~al.(2021)Tuckey, Russo, and Broda]{pasocs}
Tuckey, D., Russo, A., and Broda, K.
\newblock {PASOCS:} {A} parallel approximate solver for probabilistic logic
  programs under the credal semantics.
\newblock \emph{CoRR}, abs/2105.10908, 2021.

\bibitem[Yang et~al.(2020)Yang, Ishay, and Lee]{yang20}
Yang, Z., Ishay, A., and Lee, J.
\newblock Neurasp: Embracing neural networks into answer set programming.
\newblock In \emph{Proceedings of the Twenty-Ninth International Joint
  Conference on Artificial Intelligence, {IJCAI-20}}, 2020.

\end{thebibliography}
\bibliographystyle{icml2023}

\end{document}